\newtheorem{theorem}{Theorem}
\theoremstyle{definition}
\newtheorem{definition}{Definition}
\newtheorem{problem}{Problem}
\title{The Karp Dataset}
\author{
  Mason DiCicco \\
  Department of Computer Science\\
  Worcester Polytechnic Institute\\
  Worcester, MA 01609 \\
  \texttt{mtdicicco@wpi.edu}
  \And
  Eamon Worden \\
  Department of Computer Science\\
  Worcester Polytechnic Institute\\
  Worcester, MA 01609 \\
  \texttt{eaworden@wpi.edu}
  \And
  Conner Olsen \\
  Department of Computer Science\\
  Worcester Polytechnic Institute\\
  Worcester, MA 01609 \\
  \texttt{caolsen@wpi.edu}\
  \And
  Nikhil Gangaram \\
  Department of Computer Science\\
  Worcester Polytechnic Institute\\
  Worcester, MA 01609 \\
  \texttt{nrgangaram@wpi.edu}
  \And
  Daniel Reichman \\
  Department of Computer Science\\
  Worcester Polytechnic Institute\\
  Worcester, MA 01609 \\
  \texttt{dreichman@wpi.edu}
  \And
  Neil Heffernan \\
  Department of Computer Science\\
  Worcester Polytechnic Institute\\
  Worcester, MA 01609 \\
  \texttt{nth@wpi.edu}
}
\begin{document}

\maketitle

\begin{abstract}
Understanding the mathematical reasoning capabilities of Large Language Models (LLMs) is a central topic in the study of artificial intelligence. This new domain necessitates the creation of \emph{datasets of reasoning tasks} for both training and benchmarking the performance of LLMs. To this end, we introduce the \emph{Karp dataset}: The first dataset composed of detailed proofs of NP-completeness reductions. The reductions vary in difficulty, ranging from simple exercises of undergraduate courses to more challenging reductions from academic papers.   
We compare the performance of state-of-the-art models on this task and demonstrate the effect of fine-tuning with the Karp dataset on reasoning capacity.
\end{abstract}

\section{Introduction}

Perhaps the concept receiving the most attention in theoretical computer science is that of a \emph{reduction}. Loosely speaking, a reduction between decision problems $A$ and $B$ is a mapping $f$ such that: If $x$ is an input to $A$, then $f(x)$ is an input to $B$, and the answer to $x$ is ``yes'' if and only if the answer to $f(x)$ is ``yes.'' Efficiently computable reductions can be used to leverage algorithms that solve $B$ in order to solve $A$. Furthermore, efficient reductions can establish hardness results: if $A$ is believed to be intractable, and $A$ reduces to $B$ efficiently, then  $B$ is intractable as well, since an efficient algorithm for $B$ can be used to solve $A$.  This simple observation is at the core of the theory of NP-completeness, which is the topic of thousands of papers and an influential monograph~\cite{garey1979computers}. 

Our goal is to study the capabilities of Large Language Models (LLMs) and their potential to influence formal mathematics. To that end, we built a new dataset of 90 NP-hardness proofs (reductions) to be used for evaluation and training of language models. We are not aware of the study of LLMs for proving new NP-hardness results (by constructing reductions) or reproving and verifying known results. We believe that aiming language models at reductions \emph{in particular} has great potential to benefit our understanding of their reasoning capabilities and applicability to formal mathematics. This is because:

\begin{itemize}

\item Finding a reduction between two problems is a high-level reasoning task. Imbuing LLMs with the ability to construct reductions could lead to improved reasoning capabilities.

\item It is feasible to construct dozens of examples of reductions that are theoretically interesting, go beyond symbolic manipulations to prove mathematical identities, and have a short (several paragraphs) proof using natural language. The existence of short yet difficult-to-find proofs hints that such proofs can be found automatically with reasonable computing resources (e.g., memory, training time). 

\item Such datasets are challenging to construct in other mathematical domains. Current datasets of mathematical problems (e.g., \cite{hendrycks2021measuring}) that are used to evaluate math capabilities of large language models generally focus on a single numerical or symbolic outcome.

\end{itemize}

\subsection{Related work}
There has been extensive recent research directed toward using generative AI, neural networks, and Interactive Theorem Provers (ITP) in pushing the boundaries of mathematics \citep{azerbayev2021llemma,buzzard2020proving,hendrycks2021measuring,lample2205hypertree,polu2022formal,szegedy2020promising} including proving new theorems as well as reproving known theorems. To our knowledge, they do not include proofs of NP completeness using reductions. Very few works seem to have studied automatically constructing reductions toward establishing NP-completeness results. One of the more advanced datasets similar to ours is The CLRS Algorithmic Reasoning Benchmark of \citet{velivckovic2022clrs}, which predicts the trajectories of various algorithms using an algorithmic model but explicitly avoids NP-Hard problems. Motivated by the education domain, \cite{creus2014automatic} study the problem of testing the correctness of reductions using SAT-solvers and designated programming language REDNP to establish NP-completeness. One bottleneck noted in proof verification using SAT solvers is the large size of SAT formulas obtained in the process of verification. Recently, \citet{zhang2022karp} introduced Karp, a language for programming and testing reductions, motivated by the educational domain as well. Karp is a Racket-esque framework that can be used to define computational problems as well as reductions between them. In addition to providing a systematic way to construct reductions, Karp automatically tests the correctness of reductions. The Karp dataset contains significantly fewer solved questions compared to most math datasets. It does not use generative AI to find reductions and their proofs. 

Related datasets such as MATH \citep{hendrycks2020measuring}, MathQA \citep{amini2019mathqa}, GSM8K \citep{cobbe2021training}, MGSM \citep{shi2022language}, ProofWriter \citep{tafjord2020proofwriter} and others have offered new ways to evaluate the mathematical reasoning and proof generation capabilities of language models. The MATH dataset consists of challenging problems taken from high school math competitions, testing a model's elementary problem-solving skills across various domains of mathematics. The GSM8K and MGSM (multilingual GSM8K) datasets focus on grade-school math problems, assessing the model's ability to perform arithmetic reasoning and handle multi-step calculations. ProofWriter evaluates a model's proficiency in generating natural language proofs for elementary logical inference tasks, emphasizing multi-hop reasoning. While these datasets are instrumental in testing general mathematical and logical reasoning, they are completely disjoint from the task of constructing reductions for NP-completeness proofs. Reductions in computational complexity involve a unique blend of algorithmic thinking, formal proof techniques, and an understanding of computational problems' intrinsic properties. This gap highlights the need for specialized resources.

\subsection{Evaluating large language models on the Karp dataset}

Datasets such as MATH and MGSM are valuable because they allow for standardized comparison of the capabilities of language models, but LLMs now excel at scoring highly on them. For instance, GPT-4o \citep{achiam2023gpt} scores over 90\% on GSM8K, 75\% on the MATH dataset, and around 86\% on MMLU (Massive Multitask Language Understanding) \citep{hendrycks2020measuring}. While impressive, there are concerns that LLMs have been overfit on the testing datasets due to their availability on the internet. Moreover, achieving a high level of performance on GSM8K, which consists of grade-school math problems, only indicates that LLMs are comparable to highly skilled eighth graders. As more advanced LLMs such as Strawberry (also known as o1) are released, researchers will be aiming towards matching the problem-solving capacity of undergraduate or even PhD level students. This necessitates datasets of complex higher-education-level questions such as reductions.

\section{The Karp dataset}
\label{sec:dataset}

Our dataset consists of detailed natural language descriptions of dozens of reductions establishing NP-hardness proofs. These proofs are significantly more involved and labor-intensive to generate relative to math problems with a numerical answer~\citep{hendrycks2021measuring} or a sequence of computational steps as a solution~\citep{cobbe2021training}. Every reduction in the dataset is sourced from well-known literature such as \cite{garey1979computers,papadimitriou1994computational, dasgupta2006algorithms}. The dataset also contains natural language versions of Karp's 21 original NP-complete problems \citep{karp2010reducibility}. Other sources include academic papers \cite{garey1974some, garey1976some, fomin2013parameterized,aloise2009np} and dedicated surveys of NP-completeness \cite{ausiello2012complexity} and the references therein.

Many proofs of NP-completeness in the literature compress proofs of claims that are somewhat tedious to prove formally, and it has been observed that some proofs contain inaccuracies~\citep{zhang2022karp}. In our proofs, we attempted to avoid including unproven claims, emphasizing clarity at the cost of verbosity. Such proofs also often rely on diagrams, which we convert to natural language for LLM comprehension. As a result, the proofs in our dataset are somewhat longer than proofs in other datasets, altogether spanning over 170 pages. We avoided including problems with highly complex proofs that require more than two pages. The reductions in the dataset have lengths between 1000 and 6000 characters and have an average length of approximately 2000 characters. The distribution of lengths is depicted in Figure \ref{fig:lengths}. Some examples of reductions can be found in Appendix \ref{sec:reductions}, and the full lists of problems and reductions can be found in Tables \ref{table:problems} and \ref{table:reductions}, respectively. We will share the full dataset with interested researchers upon request. If interested, please email Mason DiCicco \texttt{mtdicicco@wpi.edu}.

\paragraph{Formatting}

The dataset consists of reductions (in the form of \LaTeX-typeset theorems) between computational problems whose definitions are also provided. Reductions in the dataset adhere to a highly structured template: A precise definition of the mapping followed by a proof of correctness (See Figure \ref{fig:template}). The language is fairly expository and instructive: While all the content of a formal proof is present, we frequently include conceptual justification of non-trivial logical steps. 

\paragraph{Omitted details}
In all of our proofs, we omit a key concept needed to establish NP-completeness: Polynomial-time computability and verification. For example, in a proper NP-completeness proof, the mapping from one decision problem to another must be possible to implement efficiently\footnote{Here, ``efficiently'' means ``in polynomial time'', with respect to the size of the input}, otherwise the reduction is vacuous (e.g., if exponential time is allowed, then one could just brute-force the answer to the original problem.) Efficiency of a reduction is often easy (but tedious) to prove, and we maintain that this holds true for all problems in our dataset. The same applies for polynomial-time verification of NP-problems (required for membership in NP). Hence, we choose to mask these details in the current dataset. 

\begin{figure}
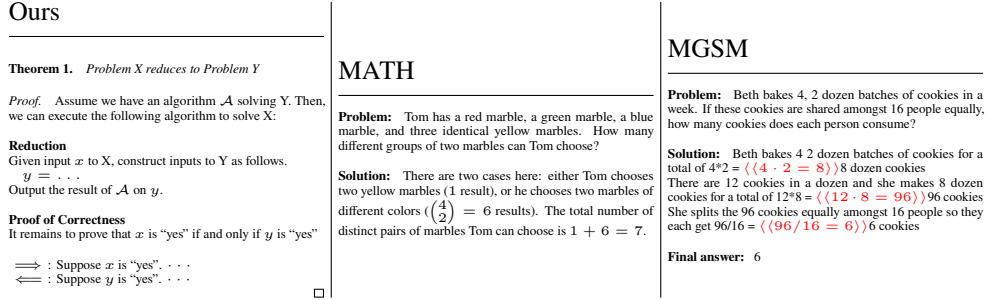

    \centering
    \begin{minipage}{0.3 \textwidth}
        Ours

        \medskip
        \hrule
        \medskip
        
        \scalefont{0.5}
        \input{img/reduction_template}
    \end{minipage}
    \vrule \ 
    \begin{minipage}{0.3 \textwidth}
        MATH
        
        \medskip
        \hrule
        \medskip
        
        \scalefont{0.5}
        \input{img/math_template}
        \end{minipage}
    \vrule \ 
    \begin{minipage}{0.3 \textwidth}
        MGSM
        
        \medskip
        \hrule
        \medskip
        
        \scalefont{0.5}
        \input{img/gsm8k_template}
        \end{minipage}
    \caption{Our reduction template (left) compared to MATH (middle) and GSM8k (right)}
    \label{fig:template}
\end{figure}

\section{Experiments}

\label{sec:experiments}
In contrast to computations and formal logical deductions, natural-language mathematical proofs resist straightforward automatic verification. Due to this limitation, all models are manually evaluated on a small, fixed test set by a human expert (a graduate student in theoretical computer science). 

\paragraph{Test set} 
We initially evaluated our models on a randomly chosen set of 8 reductions from the dataset, at the level of undergraduate homework assignments (test set). After our initial evaluation, Strawberry was released and achieved significantly better results on the test set. To gain a better understanding of the capabilities of Strawberry, we constructed an additional list of eight more challenging reductions (challenge set) that did not belong to the original dataset.  

\paragraph{Prompts}

Models are evaluated on their responses to a highly structured prompt, which asks for a reduction between two decision problems. The prompt provides a \LaTeX \ template for the reduction, which matches the format of the dataset, states the two problems and any necessary definitions, and asks for a detailed reduction. Full examples of prompts can be found in Section~\ref{sec:prompts}.

\paragraph{Scoring}

Completed reductions receive a score of $0$, $1$, or $2$, where $0$ represents a completely incorrect answer, $1$ reflects a construction that contains significant yet fixable flaws, and $2$ indicates a fully or nearly correct reduction with only minor errors. If the response contains superficial bugs (such as \LaTeX-compilation errors), we repair these and proceed with normal scoring.

\paragraph{Models}

We compare the performance of OpenAI's recent Strawberry model, the Llama70B-Instruct base model \citep{touvron2023llama} as well as our fine-tuned Llama70B-Instruct model, which we call LlamaReduce. The fine-tuning method we used is described in Appendix \ref{sec:finetuning}.

\paragraph{Results}

\begin{table}
\centering
\begin{tabular}{lccc}
\toprule
\textbf{Benchmark} & \textbf{Strawberry} & \textbf{Llama} & \textbf{LlamaReduce} \\
\midrule
Test set & 1.5 & 0.875 & 1.25 \\
Challenge set & 0.875 & 0.375 & 0.5  \\
\bottomrule
\end{tabular}
\label{table:results}

\medskip
\caption{Average scores achieved by Strawberry, Llama, and LlamaReduce on the two problem sets. In the second row, LlamaReduce has been fine-tuned on the entire Karp dataset, while in the first row, the test set is held out during training.}
\end{table}

Strawberry achieves impressive averages of 1.5 on the test set, and 0.875 on the challenge set. Interestingly, Strawberry even gave a more compact version of a current well-known reduction in the challenge set (See Section \ref{sec:prompts}). This outperforms the base Llama model, which scores 0.875 on the test set and 0.375 on the challenge set. The only problem that Llama answered correctly from the challenge set was \emph{NAE4SAT to Set Splitting}, whose difficulty is relatively low. LlamaReduce clearly benefited from fine-tuning on the Karp dataset, as it was able to score 1.25 and 0.5 on the test and challenge sets respectively. The complete breakdown of scores is compiled in Tables \ref{table:easyresults} and \ref{table:hardresults} in Appendix \ref{sec:results}.

\begin{table}[ht]
\centering
\begin{tabular}{lccc}
\toprule
\textbf{Problem} & \textbf{Strawberry} & \textbf{Llama} & \textbf{LlamaReduce} \\
\midrule
3Coloring to Planar 3Coloring & 1 & 0 & 0 \\ 
3SAT to Independent Set & 2 & 1 & 1 \\ 
3SAT to NAE4SAT & 1 & 0 & 2 \\ 
Hamiltonian Path to K-SpanningTree & 0 & 0 & 0 \\ 
Independent Set to Set Packing & 2 & 1 & 2 \\
Independent Set to Vertex Cover & 2 & 2 & 1 \\ 
Partition to Bin Packing & 2 & 2 & 2 \\ 
Partition to Knapsack & 2 & 1 & 2 \\ 
\midrule
\textbf{Average} & 1.5 & 0.875 & 1.25 \\
\bottomrule
\end{tabular}

\medskip

\caption{Scores achieved by each model on each problem in the test set.}
\label{table:easyresults}

\end{table}

These preliminary findings, especially the low scores achieved on the challenge set, suggest that reductions are a challenging task for LLMs, leaving room for potential improvement. For easier reductions (such as those in the test set), fine-tuning was beneficial in improving performance. The impressive performance of Strawberry provides additional evidence that prompt engineering has a significant effect on problem-solving capacity, particularly on problems from the test set (at the level of homework questions from an undergraduate course covering NP-completeness): It is known that the design philosophy of Strawberry is to scale-up \emph{inference-time compute} with repeated prompting and self-evaluation. Both prompt engineering and fine-tuning appear to be less effective for improving performance for the harder reductions such as those in the challenge dataset.  

We also evaluate LlamaReduce on the MATH and MGSM datasets. Results are in Appendix \ref{sec:results}. 

\section{Example prompts and responses}
\label{sec:prompts}

This section contains the prompts and responses from the Strawberry model on the \emph{Independent Set to Induced Matching} and \emph{Clique to Balanced Bi-clique} reductions. For the sake of readability, the \LaTeX snippets in the prompts have been made renderable.

\paragraph{Prompt}

\begin{quote}
You are a bot designed to write NP-Hardness reductions. You must use the following LaTeX template to write the reduction:

\begin{theorem}
    Problem Independent\_Set reduces to Problem Induced\_Matching
\end{theorem}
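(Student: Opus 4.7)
The plan is to reduce Independent Set to Induced Matching via a pendant-attachment construction. Given an instance $(G, k)$ of Independent Set with $G = (V, E)$, I would produce the Induced Matching instance $(H, k)$, where $H$ is obtained from $G$ by attaching a fresh pendant vertex $v'$ to every $v \in V$; formally, $V(H) = V \cup \{v' : v \in V\}$ and $E(H) = E \cup \{\{v, v'\} : v \in V\}$. Crucially, each pendant $v'$ has degree exactly one in $H$, a fact the proof will use repeatedly.

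For the forward direction ($\implies$), given an independent set $S \subseteq V$ of size $k$, I would exhibit the matching $M := \{\{v, v'\} : v \in S\}$ of size $k$. The endpoints of $M$ are $\{v, v' : v \in S\}$, and any chord among them must be either (a) an edge $\{u, v\} \in E$ with $u, v \in S$, which is ruled out by independence, or (b) an edge of $H$ incident to some pendant $v'$ other than the pendant edge $\{v, v'\}$ itself, which is ruled out because $\deg_H(v') = 1$. Hence $M$ is an induced matching.

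For the backward direction ($\impliedby$), given an induced matching $M$ of size $k$ in $H$, I would define $\phi : M \to V$ that picks a $V$-endpoint from each matching edge: set $\phi(\{v, v'\}) := v$ for pendant edges and $\phi(\{u, v\}) := u$ for edges inherited from $E$ (breaking ties arbitrarily). Since distinct matching edges have disjoint endpoints, $\phi$ is injective, so $S := \phi(M)$ has size exactly $k$. To see $S$ is independent in $G$, suppose for contradiction that $u, v \in S$ are adjacent in $G$; then $\{u, v\} \in E(H)$ would join an endpoint of the matching edge $\phi^{-1}(u)$ to an endpoint of the distinct matching edge $\phi^{-1}(v)$, which is exactly the kind of chord forbidden by the induced-matching property of $M$.

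The main obstacle is the backward direction, and specifically the fact that $M$ need not consist only of pendant edges---it may freely mix pendants with $G$-edges. The endpoint-selection map $\phi$ handles this uniformly: regardless of the mix, each matching edge contributes exactly one vertex to $S$, and the induced-matching condition on $M$ translates directly into non-adjacency of the selected vertices in $G$. The forward direction is essentially immediate from the construction; the only real content is verifying that $\phi$ is simultaneously injective and independence-preserving.
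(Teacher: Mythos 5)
Your construction is correct, and it is genuinely different from the one the paper presents for this theorem. The paper's ``proof'' here is the Strawberry-generated reduction, which replaces each vertex $v$ of $G$ by a pair $u_v, w_v$ joined by an edge and, for each edge $(v_i,v_j)\in E$, adds all four cross edges between the pairs; the paper itself points out that this is flawed (score $1$): in the backward direction an induced matching may consist entirely of ``crossed'' edges $(u_v, w_{v'})$, so the set $\{v : (u_v,w_v)\in M\}$ can be empty, and an exchange argument (or, in Cameron's original version, replacing the cross edges by long paths) is needed to repair it. Your pendant construction avoids this failure mode altogether: because every edge of $H$ has at least one endpoint in $V$ and each pendant has degree one, the selection map $\phi$ extracts a $V$-endpoint from \emph{every} matching edge, whether it is a pendant edge or an edge inherited from $G$, and injectivity plus the induced-matching condition immediately give an independent set of the same size --- no case analysis on the ``shape'' of $M$ and no exchange step. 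The forward direction is likewise cleaner, since the only possible chords among the chosen endpoints are original $G$-edges inside $S$. What the doubling/subdivision route buys in exchange for its extra complexity is hardness on restricted classes (e.g., bipartite graphs after subdividing); your $H$ contains $G$ as a subgraph, so it proves only the general-graph statement, but that is all the theorem as stated requires, and your argument is both sound and shorter.
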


\begin{proof}
    Assume we have an algorithm $A$ solving Problem Induced\_Matching. Then, we can execute the following algorithm to solve Problem Independent\_Set:

    \paragraph{Reduction}

    Given inputs $x_1, \cdots$ to Problem Independent\_Set, construct an instance of Problem Induced\_Matching as follows.

    \begin{itemize}
        \item $y_1 = \cdots$
    \end{itemize}

    Output the result of $A$ on $y_1, \cdots$.

    \paragraph{Proof of Correctness}

    To establish correctness, it remains to prove that $x_1, \cdots$ contains a foo of size $k$ $\iff$ $y_1, \cdots $ contains a bar of size $k'$.

    \medskip

    $\implies$: Suppose $x_1, \cdots$ contains a foo of size $k$. $\cdots$ Thus, $y_1, \cdots $ contains a bar of size $k'$.

    $\impliedby$: Suppose $y_1, \cdots $ contains a bar of size $k'$. $\cdots$ Thus, $x_1, \cdots$ contains a foo of size $k$. $\cdots$.

\end{proof}

\begin{definition}
    Given an undirected graph $G = (V,E)$, a subset of vertices $S \subseteq V$ is an \emph{independent set} if no two nodes are joined by an edge:
    \[
        \forall u,v \in S: (u,v) \notin E.
    \]
\end{definition}

\begin{problem}[Independent Set]
\ 
\begin{itemize}
    \item \textbf{Input:} $(G,k)$ where
          \begin{itemize}
              \item $G = (V,E)$ is an undirected graph
              \item $k$ is a positive integer
          \end{itemize}
    \item \textbf{Output:}
          \(
          \begin{cases}
              1 & \text{$G$ has an independent set of size $k$} \\
              0 & \text{otherwise}
          \end{cases}
          \)
\end{itemize}
\end{problem}

\begin{definition}
    Given an undirected graph $G = (V,E)$, a subset of edges $M \subseteq E$ is a \emph{matching} if no two edges in $M$ are incident to the same vertex. A matching $M$ is \emph{induced} the endpoints of distinct edges in $M$ are not adjacent. (That is, $M$ is the edge-set of an induced subgraph of $G$).
\end{definition}

\begin{problem}[Induced Matching]
\ 
\begin{itemize}
    \item \textbf{Input:} $(G,k)$ where
          \begin{itemize}
              \item $G = (V,E)$ is an undirected graph
              \item $k$ is a positive integer
          \end{itemize}
    \item \textbf{Output:}
          \(
          \begin{cases}
              1 & \text{$G$ has an induced matching of size $k$} \\
              0 & \text{otherwise}
          \end{cases}
          \)
\end{itemize}
\end{problem}

Write a proof that Induced\_Matching is NP-hard by reducing from Independent\_Set.

\end{quote}

\paragraph{Response}

\begin{quote}
\begin{theorem}
    Problem Independent Set reduces to Problem Induced Matching
\end{theorem}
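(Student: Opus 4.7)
The plan is to construct $H$ by attaching a degree-one pendant to every vertex of $G$. Formally, given an Independent Set instance $(G,k)$ with $G=(V,E)$, I would set $V(H) = V \cup \{v' : v \in V\}$ where each $v'$ is a fresh vertex, and $E(H) = E \cup \{(v,v') : v \in V\}$, with unchanged threshold $k' = k$. This corona-like construction is linear in $|V|+|E|$ and preserves the target size exactly, so no rescaling is needed.

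The forward direction is direct. Given an independent set $S \subseteq V$ of size $k$, I take $M = \{(v,v') : v \in S\}$ as the candidate induced matching of size $k$. Vertex-disjointness is immediate because the pendants are fresh. The induced property follows because each $v'$ has unique neighbor $v$ in $H$, and for distinct $u,v \in S$ we have $(u,v) \notin E$ by independence of $S$, which rules out any $H$-edge between endpoints of distinct pendant edges in $M$.

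The reverse direction is the main step. Given an induced matching $M$ of size $k$ in $H$, I would extract $S$ by the following rule: for each $m \in M$, place in $S$ the endpoint of $m$ lying in $V$ (the unique such endpoint if $m$ is a pendant edge, or an arbitrary endpoint, say the lexicographically first, if $m$ is an original edge of $E$). To see $|S| = k$, note that each $v \in V$ lies in at most one edge of $M$ by the matching property of $M$ in $H$, so the chosen $V$-vertices are distinct and number $|M| = k$. For independence, suppose toward contradiction that $u,v \in S$ with $(u,v) \in E$; then $u$ and $v$ are chosen from edges $m_u \neq m_v$ in $M$ (we never pick two endpoints of the same edge), and the $H$-edge $(u,v)$ joins endpoints of distinct matching edges, contradicting the induced property of $M$.

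The main obstacle is bookkeeping in the reverse direction, specifically the possibility that $M$ mixes original $G$-edges with pendant edges rather than consisting only of pendants. The extraction rule above sidesteps this uniformly, relying only on the matching property (for distinctness of picked vertices) and the induced property (for independence). No further gadgetry is required, and the polynomial-time computability of the reduction is immediate from the construction.
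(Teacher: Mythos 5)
Your reduction is correct, and it is a genuinely different construction from the one the paper works through for this theorem. The paper's example is the model-generated reduction that replaces each vertex $v$ of $G$ by an edge $(u_v,w_v)$ and each edge of $G$ by all four cross connections between the corresponding pairs (yielding a clique on the four new vertices); the paper scores that response $1$ because the backward direction breaks when the induced matching consists only of ``crossed'' edges $(u_v,w_{v'})$, and it points to the classical gadget of Cameron, where the four cross edges are replaced by disjoint paths of length $8$. You instead leave $G$ intact and attach a single pendant $v'$ to each vertex, with $k'=k$. The forward direction is routine, and the backward direction is where your approach earns its keep: by selecting exactly one $V$-endpoint from each matching edge --- whether that edge is a pendant edge or an original edge of $E$ --- you get $k$ distinct vertices (by the matching property) whose pairwise non-adjacency follows immediately from the induced property, so the degenerate case that sinks the paper's example (a matching with no ``uncrossed'' edges) simply never arises. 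The trade-off is that your construction does not preserve structural restrictions such as bipartiteness, which is what the longer path gadget in the literature is built for; but for the theorem as stated, over general graphs, your pendant construction is fully correct, uses only $|V|$ extra vertices and $|V|$ extra edges, and is simpler than both constructions discussed in the paper.
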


\begin{proof}
    Assume we have an algorithm $A$ solving Problem Induced Matching. Then, we can execute the following algorithm to solve Problem Independent Set:

    \paragraph{Reduction}

    Given inputs $(G,k)$ to Problem Independent Set, construct an instance of Problem Induced Matching as follows.

    \begin{itemize}
        \item For each vertex $v \in V(G)$, create two vertices $u_v$ and $w_v$ in $G'$, and add an edge $(u_v, w_v)$.
        \item For every edge $(v_i,v_j) \in E(G)$, add edges between all pairs of vertices from $\{u_{v_i}, w_{v_i}\}$ and $\{u_{v_j}, w_{v_j}\}$. Specifically, add edges:
              \begin{itemize}
                  \item $(u_{v_i}, u_{v_j})$
                  \item $(u_{v_i}, w_{v_j})$
                  \item $(w_{v_i}, u_{v_j})$
                  \item $(w_{v_i}, w_{v_j})$
              \end{itemize}
    \end{itemize}

    Set $k' = k$.

    Output the result of $A$ on $(G',k')$.

    \paragraph{Proof of Correctness}

    To establish correctness, it remains to prove that $G$ contains an independent set of size $k$ $\iff$ $G'$ contains an induced matching of size $k'$.

    \medskip

    $\implies$: Suppose $G$ contains an independent set $S \subseteq V(G)$ of size $k$. Consider the set of edges $M = \{ (u_v, w_v) \mid v \in S \}$ in $G'$.

    \begin{itemize}
        \item Since $S$ is an independent set in $G$, there are no edges between any pair of vertices $v_i, v_j \in S$.
        \item Therefore, by construction, there are no edges between any pair of edges $(u_{v_i}, w_{v_i})$ and $(u_{v_j}, w_{v_j})$ in $G'$ for $v_i, v_j \in S$.
        \item Additionally, since each edge in $M$ shares no common vertices with others in $M$, $M$ is a matching in $G'$.
        \item There are also no edges connecting the endpoints of different edges in $M$ (due to the absence of connections between their vertices), so $M$ is an \emph{induced matching} in $G'$.
    \end{itemize}

    Thus, $G'$ contains an induced matching of size $k'$.

    \medskip

    $\impliedby$: Suppose $G'$ contains an induced matching $M$ of size $k' = k$. Let $S = \{ v \in V(G) \mid (u_v, w_v) \in M \}$.

    \begin{itemize}
        \item Since $M$ is a matching in $G'$, no two edges in $M$ share a common vertex.
        \item Since $M$ is an induced matching, there are no edges between the endpoints of different edges in $M$.
        \item By construction, if there were an edge $(v_i, v_j) \in E(G)$ between any two vertices $v_i, v_j \in S$, then in $G'$, there would be edges between the endpoints of $(u_{v_i}, w_{v_i})$ and $(u_{v_j}, w_{v_j})$, violating the induced matching property.
        \item Therefore, there are no edges between any pair of vertices in $S$ in $G$.
    \end{itemize}

    Thus, $S$ is an independent set of size $k$ in $G$.

    \medskip

    Therefore, $G$ contains an independent set of size $k$ if and only if $G'$ contains an induced matching of size $k'$.

\end{proof}
\end{quote}

\paragraph{Score}

This construction does indeed resemble the well-known reduction from \cite{cameron1989induced}. The key difference is that Strawberry constructs a clique on the four vertices $\{u_{v_i},w_{v_i},u_{v_j},w_{v_j}\}$ for each edge $(v_i,v_j) \in E$, whereas \citep{cameron1989induced} constructs a larger gadget, where the four edges between the $u$'s and $w$'s are replaced with disjoint paths of length 8. Although Strawberry's version is not correct, it can be repaired in a straightforward manner. In particular, the proof of correctness breaks down in the ``$\impliedby$'' step as $S$ may be empty; $M$ may contain only ``crossed'' edges $(u_v,w_{v'})$. However, one can verify that these edges may be replaced with ``uncrossed'' edges without changing the size of the matching, which is sufficient to complete the proof. As a result, this response receives a score of 1.

\paragraph{Prompt}

\begin{quote}
You are a bot designed to write NP-Hardness reductions. You must use the following LaTeX template to write the reduction:

\begin{theorem}
    Problem Clique reduces to Problem Balanced\_Biclique
\end{theorem}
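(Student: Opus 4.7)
The plan is to reduce by constructing a bipartite graph $H$ from $(G,k)$ whose balanced bicliques of size $k$ correspond to $k$-cliques of $G$. My first attempt is the natural bipartite-double-cover construction: take $L$ and $R$ to be disjoint copies of $V(G)$, label their elements $v_L$ and $v_R$, and place an edge between $u_L$ and $v_R$ exactly when $u=v$ or $\{u,v\}\in E(G)$; set $k'=k$. The forward direction is then immediate: given a clique $C$ of size $k$ in $G$, the sets $L' = \{v_L : v\in C\}$ and $R' = \{v_R : v\in C\}$ form a $K_{k,k}$ in $H$, because every required cross-edge is either diagonal (present by construction) or corresponds to an adjacent pair inside $C$.

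For the reverse direction, I would define $A = \{v : v_L \in L'\}$ and $B = \{v : v_R \in R'\}$, so $|A|=|B|=k$, and the biclique property forces that any distinct $u\in A$ and $v\in B$ are adjacent in $G$. When $A=B$, the set $A$ is itself a $k$-clique and the proof is done. I expect the main obstacle here: $A$ and $B$ need not coincide, and pairs of vertices appearing only on a single side are not constrained to be adjacent in $G$. In fact the bare construction already admits spurious bicliques, for example whenever $G$ itself contains a copy of $K_{k,k}$ with no underlying $k$-clique (so that one side of the biclique can be used for $L'$ and the other for $R'$).

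To close this gap I would strengthen the reduction with a symmetry-forcing gadget. One option is to attach to each vertex $v$ a private structure of auxiliary neighbors linking $v_L$ and $v_R$ so that any balanced biclique of size $k$ must pick $v_L$ and $v_R$ together, aligning the two sides and collapsing $A$ and $B$ to a common set $C$ which is then a $k$-clique by the biclique condition. An alternative is to first reduce Clique to Multicolored Clique and then use the color classes $V_1,\dots,V_k$ to partition $L$ and $R$, so that each color must contribute one matched pair to any $K_{k,k}$. In either case, the delicate step will be verifying that the gadgetry does not itself introduce new spurious bicliques, and choosing parameters (number of pendant copies, forbidden edges between color classes) large enough to dominate any non-clique configuration while keeping the reduction polynomial-sized.
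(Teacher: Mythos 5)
Your diagnosis of the naive bipartite-double-cover construction is exactly right, and it matches the paper's own assessment: the paper presents this theorem only as a prompt, and the displayed ``proof'' is a model response implementing precisely the construction you describe first, which the authors score $0$ because $G'$ may contain a misaligned biclique where the two sides do not come from a common vertex set (their counterexample is $G = K_{3,3}$, an instance of your observation that any $K_{k,k}$ in $G$ without an underlying $k$-clique yields a spurious solution). So up to the point where you identify the obstacle, you are tracking the paper; the paper, however, never supplies a correct reduction, so there is no reference proof to compare your repair against.

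The problem is that your repair is only a sketch, and the sketch stops exactly where the difficulty lives. A ``private structure of auxiliary neighbors linking $v_L$ and $v_R$'' is not straightforward to realize: the output graph must remain bipartite, a balanced biclique is under no obligation to include any particular pendant vertex, and gadget vertices of high degree can themselves participate in large spurious bicliques --- the very failure mode you are trying to exclude. Likewise, partitioning $L$ and $R$ by color classes does not by itself force each color to contribute a \emph{matched} pair $(v_L, v_R)$ for the same $v$; you would still need a mechanism that penalizes choosing $u_L$ and $v_R$ for $u \neq v$ in the same class, and you give no such mechanism. The known correct reductions for Balanced Complete Bipartite Subgraph (Garey--Johnson problem GT24 and its descendants) require a concrete, verified padding or replication argument precisely because these naive alignment tricks fail. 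Until you exhibit an explicit gadget, prove that every balanced biclique of the target size must be aligned, and check that the gadget introduces no new large bicliques of its own, you have a correct critique of the wrong proof but not a proof.
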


\begin{proof}
    Assume we have an algorithm $A$ solving Problem Balanced\_Biclique. Then, we can execute the following algorithm to solve Problem Clique:

    \paragraph{Reduction}

    Given inputs $x_1, \cdots$ to Problem Clique, construct an instance of Problem Balanced\_Biclique as follows.

    \begin{itemize}
        \item $y_1 = \cdots$
    \end{itemize}

    Output the result of $A$ on $y_1, \cdots$.

    \paragraph{Proof of Correctness}

    To establish correctness, it remains to prove that $x_1, \cdots$ contains a foo of size $k$ $\iff$ $y_1, \cdots $ contains a bar of size $k'$.

    \medskip

    $\implies$: Suppose $x_1, \cdots$ contains a foo of size $k$. $\cdots$ Thus, $y_1, \cdots $ contains a bar of size $k'$.

    $\impliedby$: Suppose $y_1, \cdots $ contains a bar of size $k'$. $\cdots$ Thus, $x_1, \cdots$ contains a foo of size $k$. $\cdots$.

\end{proof}

\begin{definition}
    Given a undirected graph $G = (V, E)$, a subset of the vertices $S \subseteq V$ is a \emph{clique} if pair of vertices in $S$ is adjacent:
    \[
        \forall u \neq v \in S, (u,v) \in E
    \]
\end{definition}

\begin{problem}[Clique]
The Clique Problem, given a graph $G$ and an integer $k$, is to decide whether or not $G$ contains a clique of size $k$.
\begin{itemize}
    \item \textbf{Input:} $(G,k)$ where $G = (V, E)$ is an undirected graph and $k$ is a positive integer.
    \item \textbf{Output:}
          \(
          \begin{cases}
              1 & \text{$G$ contains a clique of size at least $k$.} \\
              0 & \text{Otherwise}
          \end{cases}
          \)
\end{itemize}
\end{problem}

\begin{definition}
    A \emph{balanced bipartite} graph is a bipartite graph in which both sides have the same cardinality; the bipartition $(A,B)$ satisfies $|A|=|B|$.

    Given a balanced bipartite graph $G = (V,E)$ with bipartition $(A,B)$, a subset $C \subset V$ is a \emph{balanced biclique} if it is a complete bipartite graph satisfying $|A \cap C| = |B \cap C|$.
\end{definition}

\begin{problem}[Balanced Biclique]
The Balanced Biclique Problem, given a balanced bipartite graph $G$ and an integer $k$, is to decide whether or not $G$ contains a balanced biclique of size $k$.
\begin{itemize}
    \item \textbf{Input:} $(G,k)$ where $G = (V, E)$ is a balanced bipartite graph and $k$ is a positive integer.
    \item \textbf{Output:}
          \(
          \begin{cases}
              1 & \text{$G$ contains a balanced biclique of size at least $k$.} \\
              0 & \text{Otherwise}
          \end{cases}
          \)
\end{itemize}
\end{problem}

Write a proof that Balanced\_Biclique is NP-hard by reducing from Clique.

\end{quote}

\paragraph{Response}

\begin{quote}
\begin{theorem}
    Problem \textsc{Clique} reduces to Problem \textsc{Balanced\_Biclique}.
\end{theorem}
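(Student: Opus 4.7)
The plan is to build a bipartite graph $G'$ by taking two copies of $V(G)$, one per side of the bipartition, writing $v_A$ and $v_B$ for the two copies of $v$, and to connect $u_A$ to $v_B$ whenever either $u=v$ or $(u,v) \in E(G)$. The diagonal edges $(v_A, v_B)$ are included so that a clique in $G$ lifts directly to a balanced biclique; without them, putting a clique on both sides would leave a perfect matching absent. The initial target size will be $k' = k$, to be adjusted if the gadget discussed below turns out to be needed.

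For the forward direction, given a $k$-clique $S$ in $G$, I would take $A' = \{v_A : v \in S\}$ and $B' = \{v_B : v \in S\}$. Every pair $(u_A, v_B)$ with $u, v \in S$ lies in $E'$: it is a diagonal edge when $u = v$, and it is present because $S$ is a clique when $u \neq v$. Hence $(A', B')$ is a balanced biclique of size $k$, and the reduction certifies a ``yes'' instance of Balanced Biclique.

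For the converse, given a balanced biclique $(A', B')$ with $|A'| = |B'| = k$, I would define $U = \{v : v_A \in A'\}$ and $W = \{v : v_B \in B'\}$. The biclique condition yields $(u, w) \in E(G)$ for every $u \in U, w \in W$ with $u \neq w$. The ideal situation is $U = W$, since then $U$ is immediately a $k$-clique of $G$. The main obstacle is that $U$ and $W$ may differ; in the worst case they can be disjoint, in which case $U \cup W$ forms only a complete bipartite subgraph of $G$ rather than a clique (a bipartite $G$ such as $K_{k,k}$ witnesses this failure of the naive reduction). To close this gap I would attach an identifying pendant gadget to each vertex that forces $v_A$ and $v_B$ to be selected together in any balanced biclique of the adjusted target size $k'$, for instance by giving each $v$ a private block of pendants whose contribution to a maximum balanced biclique is only realizable when both $v_A$ and $v_B$ participate. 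Designing that gadget so as to simultaneously preserve the forward direction and pin down $U = W$ is the crux of the argument; once $U = W$ is forced, extracting the $k$-clique is immediate.
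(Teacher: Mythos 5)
Your proposal does not contain a proof: you correctly diagnose the failure mode of the naive construction (a ``misaligned'' balanced biclique with $U \neq W$, witnessed by $G = K_{k,k}$), but the step that would repair it --- the gadget forcing $U = W$ --- is explicitly left undesigned. That step is the entire content of the theorem; everything before it is the standard two-sided copy of $V(G)$ with diagonal edges, whose forward direction is easy and whose converse is exactly what fails. Note, for calibration, that the text displayed under this theorem in the paper is a model-generated response that the authors score $0$ for precisely the misalignment issue you identify; so your diagnosis is right, and sharper than the model's, but neither of you has produced a correct reduction.

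Moreover, the specific repair you sketch is unlikely to be salvageable as stated. A pendant attached to $v_A$ has degree $1$ in $G'$, so it can belong to a biclique only if the opposite side of that biclique is the single vertex $v_A$; pendants therefore cannot ``contribute'' to any balanced biclique with more than one vertex per side, and a misaligned biclique is free to ignore them entirely, leaving the $K_{k,k}$ counterexample intact for every target size $k' \geq 2$. The known correct reductions for Balanced Complete Bipartite Subgraph do something quantitatively different --- e.g., blowing each vertex up into a large block of true twins on each side and choosing $k'$ so that only aligned bicliques can reach the threshold, with a counting argument replacing the local ``forcing'' you hope for --- and carrying out that counting is where the real work lies. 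As written, the proposal identifies the crux and stops there.
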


\begin{proof}
    Assume we have an algorithm $A$ solving Problem \textsc{Balanced\_Biclique}. Then, we can execute the following algorithm to solve Problem \textsc{Clique}:

    \paragraph{Reduction}

    Given inputs $(G,k)$ to Problem \textsc{Clique}, construct an instance of Problem \textsc{Balanced\_Biclique} as follows.

    \begin{itemize}
        \item Construct a balanced bipartite graph $G' = (V', E')$ with bipartition $(A,B)$ where:
        \begin{itemize}
            \item $A = V$, the vertex set of $G$.
            \item $B = V'$, a copy of the vertex set $V$ (i.e., for each $v \in V$, there is a corresponding $v' \in B$).
            \item For each edge $(u,v) \in E$, include edges $(u,v')$ and $(v,u')$ in $E'$, where $u \in A$, $v' \in B$, $v \in A$, and $u' \in B$.
        \end{itemize}
        \item Set $k' = k$.
    \end{itemize}

    Output the result of $A$ on $(G',k')$.

    \paragraph{Proof of Correctness}

    To establish correctness, it remains to prove that $G$ contains a clique of size $k$ $\iff$ $G'$ contains a balanced biclique of size $2k$.

    \medskip

    $\implies$: Suppose $G$ contains a clique $C \subseteq V$ of size $k$. Define subsets $A' = C \subseteq A$ and $B' = \{v' \mid v \in C\} \subseteq B$. For every pair $u, v \in C$, since $C$ is a clique in $G$, $(u,v) \in E$. Thus, in $G'$, edges $(u,v')$ and $(v,u')$ are in $E'$. Therefore, every vertex in $A'$ is connected to every vertex in $B'$, forming a balanced biclique of size $2k$ in $G'$.

    $\impliedby$: Suppose $G'$ contains a balanced biclique induced by $A' \subseteq A$ and $B' \subseteq B$, where $|A'| = |B'| = k$, and every vertex in $A'$ is connected to every vertex in $B'$. By the construction of $G'$, an edge $(u,v')$ exists in $E'$ only if $(u,v) \in E$ in $G$. Similarly, since each $v' \in B$ corresponds to a unique $v \in V$. Define $C = A'$. For every pair $u, v \in C$, their corresponding vertices $u' \in B'$, $v' \in B'$ satisfy $(u,v') \in E'$ and $(v,u') \in E'$, implying $(u,v) \in E$ and $(v,u) \in E$ in $G$. Therefore, $C$ is a clique of size $k$ in $G$.

\end{proof}

\end{quote}

\paragraph{Score}

This response actually contains the well-known ``naive'' reduction which is somewhat convincing, but completely incorrect. The reason this construction is incorrect is that $G'$ may contain a ``misaligned'' biclique, where $A'$ and $B'$ do not correspond to the same set of vertices in $G$. For example, take $G = K_{3,3}$ and the construction fails. Hence, this response receives a 0.

\section{Conclusion}
We have constructed the Karp dataset consisting of reductions establishing NP-completeness. Future work could examine extending the dataset with additional reductions (e.g., reductions establishing hardness of approximation of NP-hard optimization problems~\cite{arora1998proof,feige1996interactive,dinur2007pcp}). Using the Karp dataset as well as generative AI more broadly to discover new reductions and simplify known NP-completeness proofs is an exciting future direction.

The lack of automatic verification for natural language proofs of NP-completeness is a bottleneck in creating a larger dataset. In our experiments, language models failed to judge the correctness of reductions. We suspect that a transformation from natural language to more structured representations (e.g., code, formal math, the Karp language) could be beneficial for automatic verification. Studying ways to automatically verify reductions is therefore of interest. 

\newpage 
\bibliographystyle{plainnat}
\bibliography{Bib}

\newpage
\appendix

\section{Test sets}

\paragraph{Test set}

The test set consists of the reductions: \emph{Partition to Knapsack; Independent Set to Set Packing; Independent Set to Vertex Cover; Independent Set to Undirected Feedback Set; Partition to Bin Packing; Clique to Dense Subgraph; Unweighted Max Bisection to Weighted Bisection Width; Hamiltonian Cycle to Hamiltonian Path}.

\paragraph{Challenge set}

The challenge set consists of the reductions: \emph{NAE4SAT to Set Splitting; Clique to Balanced Biclique; Independent Set to Induced Matching; 3SAT to Contagious Set; 3SAT to Edge Disjoint Paths; 3Coloring to Low Diameter Clustering; Densest Cut to Sum of Squares Clustering; Vertex Cover to Planar Vertex Cover}. 

\section{Fine-tuning}

\label{sec:finetuning}
We fine-tuned Llama 70B-Instruct using Unsloth. For training, we utilized the AdamW optimizer \cite{loshchilov2017decoupled} and QLora \cite{dettmers2024qlora} with 4-bit precision to reduce memory consumption. The learning rate was set to $2 \times 10^{-5}$, following a linear scheduler with 10 warmup steps. We applied weight decay of 0.01 to prevent overfitting. The model was trained with a batch size of 8 per device. We used 16-bit floating point precision and random seed 0. LlamaReduce was trained on 1 A100 GPU until the loss converged on a validation set at 10 epochs. All models, fine-tuned or not, were inferenced with a temperature of 0.

\section{Results}
\label{sec:results}

This section contains tables of results that were omitted due to space constraints.

\begin{table}[ht]
\centering
\begin{tabular}{lccc}
\toprule
\textbf{Problem} & \textbf{Strawberry} & \textbf{Llama} & \textbf{LlamaReduce} \\
\midrule
3Coloring to Low Diameter Clustering & 2 & 1 & 2 \\
3SAT to Contagious Set             & 0 & 0 & 0 \\
3SAT to Edge-Disjoint Paths         & 1 & 0 & 0 \\
Clique to Balanced Biclique         & 0 & 0 & 0 \\
Densest Cut to Sum of Squares Clustering & 0 & 0 & 0 \\
Independent Set to Induced Matching & 1 & 0 & 0 \\
NAE4SAT to Set Splitting            & 2 & 2 & 2 \\
Vertex Cover to Planar Vertex Cover & 1 & 0 & 0 \\
\midrule
\textbf{Average}                            & 0.875 & 0.375 & 0.5 \\
\bottomrule
\end{tabular}

\medskip

\caption{Scores achieved by each model on each problem in the challenge set.}
\label{table:hardresults}

\end{table}

\begin{table}[ht]
\centering
\begin{tabular}{lccc}
\toprule
\textbf{Benchmark} & \textbf{Strawberry} & \textbf{Llama} & \textbf{LlamaReduce} \\
\midrule
MATH & 85.5 & 68.0 & 68.5 \\
MGSM & 90.8 & 86.9 & 84.5 \\ 
\bottomrule
\end{tabular}
\label{table:mathresults}

\medskip
\caption{Accuracy of Strawberry, Llama, and LlamaReduce on the MATH and MGSM benchmarks.}
\end{table}


\section{Examples of reductions}
\label{sec:reductions}

This section contains the reductions \emph{3SAT to Independent Set} as well as \emph{Hamiltonian Path to Bounded-Degree Spanning Tree} as they appear in the dataset.

\paragraph{3SAT to Independent Set}

\begin{quote}
\begin{definition}
	A \emph{$3$-CNF} is a Boolean formula equal to an AND of clauses, where each clause is an OR of exactly $3$ literals (i.e., variables or their negations). A $3$-CNF is \emph{satisfiable} if there exists an assignment of variables to true ($1$) or false ($0$) such that the entire formula evaluates to true.
\end{definition}

\begin{problem}[3SAT]
\
\begin{itemize}
	\item \textbf{Input:} $(X,C)$, where $X = \{x_1,\cdots,x_n\}$ is a set of variables and $C = \{C_1,\cdots,C_m\}$ is a set of clauses containing exactly $3$ literals derived from $X$ (i.e., $x_i$ or $\neg x_i$).
	\item \textbf{Output:}
	      \(
	      \begin{cases}
		      1 & \text{There exists an assignment (of variables in $X$) satisfying $\phi = C_1 \wedge \cdots \wedge C_m$.} \\
		      0 & \text{Otherwise}
	      \end{cases}
	      \)
\end{itemize}
\end{problem}
\begin{definition}
    Given an undirected graph $G = (V,E)$, a subset of vertices $S \subseteq V$ is an \emph{independent set} if no two nodes are joined by an edge:
    \[
        \forall u,v \in S: (u,v) \notin E.
    \]
\end{definition}

\begin{problem}[Independent Set]
\
\begin{itemize}
    \item \textbf{Input:} $(G,k)$ where
          \begin{itemize}
              \item $G = (V,E)$ is an undirected graph
              \item $k$ is a positive integer
          \end{itemize}
    \item \textbf{Output:}
          \(
          \begin{cases}
              1 & \text{$G$ has an independent set of size $k$} \\
              0 & \text{otherwise}
          \end{cases}
          \)
\end{itemize}
\end{problem}

\begin{theorem}
3SAT reduces to Independent Set 
\end{theorem}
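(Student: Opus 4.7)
The plan is to use the classical Karp-style reduction that maps each clause to a triangle and adds ``conflict'' edges between complementary literals. Given an instance $(X, C)$ of 3SAT with $n$ variables and $m$ clauses $C_1, \ldots, C_m$, I would construct a graph $G = (V, E)$ as follows. Introduce one vertex $v_{j,t}$ for each literal occurrence, i.e., for each clause $C_j = (\ell_{j,1} \vee \ell_{j,2} \vee \ell_{j,3})$ and each $t \in \{1,2,3\}$. Then add two kinds of edges: \emph{clause edges}, joining $v_{j,1}, v_{j,2}, v_{j,3}$ into a triangle for every $j$, and \emph{conflict edges}, joining $v_{j,t}$ to $v_{j',t'}$ whenever $j \ne j'$ and the literals $\ell_{j,t}$ and $\ell_{j',t'}$ are complementary (one is $x_i$ and the other is $\neg x_i$). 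The target size is $k = m$. The reduction then outputs the decision returned by the Independent Set algorithm $\mathcal{A}$ on $(G, m)$.

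For the proof of correctness, I would follow the template and handle both directions explicitly. In the $(\Rightarrow)$ direction, from a satisfying assignment $\sigma$, for each clause $C_j$ pick any index $t(j)$ such that $\ell_{j,t(j)}$ evaluates to true under $\sigma$, and let $S = \{v_{j, t(j)} : 1 \le j \le m\}$. Then $|S| = m$, no two vertices of $S$ lie in the same triangle (at most one is chosen per clause), and no conflict edge can be present because all chosen literals are simultaneously true under $\sigma$ and hence cannot include both $x_i$ and $\neg x_i$. In the $(\Leftarrow)$ direction, given an independent set $S$ with $|S| = m$, the triangle constraint forces $S$ to contain exactly one vertex from each clause. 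I would then define an assignment $\sigma$ by setting, for every $v_{j,t} \in S$ labelled by literal $\ell_{j,t}$, the underlying variable so that $\ell_{j,t}$ is true; any variable not touched can be assigned arbitrarily.

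The main subtlety, and the step I would take care over, is showing that this assignment $\sigma$ is \emph{well-defined}: no variable is forced to be both true and false. This is exactly the content of the conflict edges. If $v_{j,t}, v_{j',t'} \in S$ referred to the same variable with opposite polarities, the corresponding conflict edge would contradict independence. I would state this as an explicit ``conceptual'' paragraph in the proof, matching the expository style required by the Karp dataset template, and then note that under $\sigma$ every clause $C_j$ is satisfied because it contains the literal $\ell_{j,t(j)}$ represented by the vertex chosen from its triangle. This completes the biconditional and hence the reduction.

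Since the two problems are already defined in the excerpt and the Independent Set parameter $k$ coincides with $m$, no auxiliary definitions are needed. I would write the reduction in the exact template of the paper: a short ``Reduction'' block defining $G$ and $k' = m$, followed by a ``Proof of Correctness'' block with the $\implies$ and $\impliedby$ directions, emphasizing in natural language why the triangle-plus-conflict gadget precisely encodes ``pick one true literal per clause without contradiction.''
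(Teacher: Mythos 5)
Your proposal is exactly the paper's reduction: one vertex per literal occurrence, a triangle per clause, conflict edges between complementary literals, and target $k = m = |C|$, with the same two-directional correctness argument (including the well-definedness of the assignment via conflict edges). It is correct and takes essentially the same approach as the dataset's proof.
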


\begin{proof}
Assume we have an algorithm $\mathcal{A}$ solving Independent Set. Then, we can execute the following algorithm to solve 3SAT:

\paragraph{Reduction}

Given inputs $(X, C)$ to 3SAT, construct inputs to Independent Set $(G,k)$ as follows:
\begin{enumerate}
    \item For each clause $C_i = (a_i \vee b_i \vee c_i)$, create a ``cluster" of vertices $a_i, b_i, c_i$ in $V$, and connect them in a triangle by adding edges $(a_i,b_i), (b_i,c_i), (c_i,a_i)$ to $E$.
    \item Additionally, connect every two vertices corresponding to complementary literals (i.e. there is an edge between every $x_i$ and $\lnot x_i)$.
\end{enumerate}
Output the result of $\mathcal{A}$ on $(G, k)$, where $k = |C|$.

\paragraph{Proof of Correctness}

To establish correctness, it remains to prove that $\phi$ is satisfiable $\iff$ $G$ has an independent set of size $k$.

\medskip

$\implies$: Let $T$ be an assignment of variables satisfying $\phi$. In particular, each clause $C_i$ contains at least one true literal. Construct a set $I$ which contains one such true literal from each clause. We now claim that $I$ corresponds to an independent set in $G$ of size $k$: It contains one vertex (literal) from each of the $k$ clauses, and no pair of vertices in $I$ are adjacent since there is only one vertex per cluster and vertices corresponding to complementary literals (i.e. $x$ and $\lnot x$) cannot both be in $I$ since that would be an impossible assignment; $x$ and $\lnot x$ cannot simultaneously be true. 

$\impliedby$: Let $I$ be an independent set of size $k$ in $G$. Note that $I$ cannot contain two vertices in the same cluster. Hence, $I$ contains one vertex in each cluster of $G$ and does not contain vertices corresponding to complementary literals (i.e. $x_i$ and $\lnot x_i$). Thus, it is possible to assign every literal (vertex) in $I$ to be true simultaneously, which constitutes a satisfying assignment for $\phi$.

\end{proof}

\end{quote}

\paragraph{Hamiltonian Path to Bounded-Degree Spanning Tree}

\begin{quote}
\begin{definition}

Given an undirected graph $G = (V, E)$, a \emph{Hamiltonian path} is a simple path in $G$ that visits each vertex in $V$ exactly once.
\end{definition}

\begin{problem}[Hamiltonian Path]
\
\begin{itemize}
    \item \textbf{Input:} An undirected graph $G = (V, E)$.
    \item \textbf{Output:} 
    \(
    \begin{cases}
        1 & \text{$G$ has a Hamiltonian path.} \\
        0 & \text{Otherwise.}
    \end{cases}
    \)
\end{itemize}
\end{problem}

\begin{definition}
    Given an undirected graph $G = (V, E)$ and a positive integer $k$, a \emph{degree-$k$ spanning tree} of $G$ is a subgraph $T$ of $G$ such that:
    \begin{itemize}
        \item $T$ is connected;
        \item $T$ is acyclic;
        \item $T$ spans all the vertices of $G$ (i.e., includes all vertices in $V$);
        \item The maximum degree of any vertex in $T$ is at most $k$.
    \end{itemize}
\end{definition}

\begin{problem}[Bounded-Degree Spanning Tree]
\
\begin{itemize}
    \item \textbf{Input:} An undirected graph $G = (V, E)$ and a positive integer $k$.
    \item \textbf{Output:}
          \(
          \begin{cases}
              1 & \text{$G$ has a degree-$k$ spanning tree} \\
              0 & \text{Otherwise}
          \end{cases}
          \)
\end{itemize}
\end{problem}
\begin{theorem}
    Hamiltonian Path reduces to Bounded-Degree Spanning Tree.
\end{theorem}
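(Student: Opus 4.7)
The plan is to exploit the fact that a Hamiltonian path, viewed as a subgraph, is precisely a spanning tree of maximum degree $2$. Concretely, given an instance $G=(V,E)$ of Hamiltonian Path, the reduction simply outputs the pair $(G,2)$ to the Bounded-Degree Spanning Tree algorithm $\mathcal{A}$ and returns $\mathcal{A}$'s answer verbatim. No gadget is needed: the same graph and the threshold $k=2$ already encode the problem.

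For the forward direction, suppose $G$ has a Hamiltonian path $P = v_1 v_2 \cdots v_n$. Then $P$, considered as a subgraph of $G$, is connected, acyclic, spans $V$, and has maximum degree $2$ (the two endpoints have degree $1$ and every interior vertex has degree $2$). Hence $P$ is a degree-$2$ spanning tree of $G$, so $\mathcal{A}$ returns $1$.

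For the reverse direction, suppose $G$ has a spanning tree $T$ with $\Delta(T)\le 2$. The key structural observation is that any tree whose vertices all have degree at most $2$ is a simple path: a tree on $n\ge 2$ vertices has at least two leaves, and once every vertex is capped at degree $2$, walking from any leaf produces a path that must include every vertex (otherwise $T$ would contain a vertex of degree $\ge 3$ or be disconnected). Since $T$ spans $V$, this path traverses every vertex of $G$ exactly once, giving a Hamiltonian path in $G$.

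The only non-trivial step is the structural claim that a tree of maximum degree $2$ is a path, and this follows immediately by the leaf-counting argument above (or by an easy induction on $|V|$). Everything else is bookkeeping, so I expect no serious obstacle; the main care needed is simply to state the "tree of max degree $2$ is a path" lemma explicitly in the proof of correctness so the $\impliedby$ direction is self-contained.
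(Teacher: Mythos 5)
Your reduction is correct for the problem as stated: since $k$ is part of the input to Bounded-Degree Spanning Tree, mapping $G$ to $(G,2)$ is a legitimate reduction, and your correctness argument (a spanning tree of maximum degree $2$ is exactly a Hamiltonian path, because a connected acyclic graph with all degrees at most $2$ must be a simple path) is sound. This is in fact precisely the $k=2$ branch of the paper's own reduction, so the core idea is shared. Where the two diverge is in scope: the paper's proof additionally handles every fixed $k > 2$ by attaching $k-2$ pendant leaves to each original vertex, so that any degree-$k$ spanning tree of the augmented graph must use all pendant edges and therefore restricts to a degree-$2$ spanning tree (a Hamiltonian path) of $G$. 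That extra gadget buys a strictly stronger conclusion --- NP-hardness of Bounded-Degree Spanning Tree for each constant degree bound $k \ge 2$, not merely for the version where the reduction is free to choose $k$. Your version is shorter and fully adequate for the theorem as literally stated; if the intended claim is per-$k$ hardness, you would need the paper's pendant-vertex construction (or something equivalent) to cover $k>2$.
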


\begin{proof}
    Assume we have an algorithm $\mathcal{A}$ solving Bounded-Degree Spanning Tree. Then, we can execute the following algorithm to solve Hamiltonian Path:

    \paragraph{Reduction:}
    Given an instance $G = (V, E)$ of Hamiltonian Path, we construct an instance $(G', k)$ of Bounded-Degree Spanning Tree as follows:

    \begin{itemize}
        \item If $k = 2$, let $G' = G$.
        \item If $k > 2$:
              \begin{itemize}
                  \item Let $V' = V \cup \{v_1, v_2, \ldots, v_{k-2} \mid v \in V\}$
                  \item Let $E' = E \cup \{(v, v_i) \mid v \in V, 1 \leq i \leq k-2\}$
              \end{itemize}
    \end{itemize}

    Output the result of $\mathcal{A}$ on $(G', k)$.

    \paragraph{Proof of Correctness:}
    We claim that $G$ has a Hamiltonian path $\iff$ $G'$ has a degree-$k$ spanning tree. This clearly holds for $k=2$ as a degree-$2$ spanning tree is exactly a Hamiltonian path; a tree with maximum degree 2 is a path, and spanning $G$ is equivalent to visiting every vertex. We now show the reduction holds for all $k > 2$:
    
    $\implies$: Suppose $G$ has a Hamiltonian path $P$. We can construct a degree-$k$ spanning tree $T'$ of $G'$ by taking $P$ and adding all the new edges $(v, v_i)$ for each $v \in V$. This tree spans all vertices of $G'$, is acyclic, and has maximum degree $k$ (2 from the original path plus $k-2$ new edges).

    $\impliedby$: Conversely, suppose $G'$ has a degree-$k$ spanning tree $T'$. All the new vertices $v_i$ must be leaves in $T'$ as they have degree 1. If we remove these leaves and their incident edges ($k-2$ per vertex of $G$) from $T'$, we obtain a spanning tree $T$ of $G$ with maximum degree 2, which must be a Hamiltonian path.

\end{proof}
\end{quote}

\section{List of problems}

\begin{longtable}{lrr}
\caption{Counts of problem definitions used in reductions}
\label{table:problems} \\
\toprule
\textbf{Problem Name} & \textbf{Source} & \textbf{Dest} \\
\midrule
    3 Coloring & 2 & 1 \\ 
    3D Matching & 6 & 0 \\ 
    3-Partition & 1 & 1 \\ 
    3-SAT & 12 & 1 \\ 
    4 Coloring & 0 & 1 \\ 
    4D Matching & 0 & 1 \\ 
    4-Partition & 1 & 1 \\ 
    4-SAT & 0 & 1 \\ 
    ABCD Partition & 1 & 1 \\ 
    Almost-SAT & 0 & 1 \\ 
    Bin Packing & 0 & 2 \\ 
    Bipartization & 1 & 1 \\ 
    Bounded Degree Spanning Tree & 0 & 1 \\ 
    Clique & 6 & 3 \\ 
    Common Subgraph & 0 & 1 \\ 
    Contagious Set & 0 & 1 \\ 
    Cutting at most K Vertices & 0 & 1 \\ 
    Densest Cut & 0 & 1 \\ 
    Dense Subgraph & 0 & 1 \\ 
    Directed Edge-Disjoint Paths & 0 & 1 \\ 
    Directed Hamiltonian Path & 0 & 1 \\ 
    Dominating Set & 1 & 3 \\ 
    Double-SAT & 0 & 1 \\ 
    Edge Bipartization & 1 & 1 \\ 
    Exact Cover by 3-Sets & 1 & 1 \\ 
    Hamiltonian Cycle & 2 & 0 \\ 
    Hamiltonian Path & 2 & 1 \\ 
    Hitting Set & 0 & 2 \\ 
    Independent Set & 12 & 4 \\ 
    Integer Programming & 0 & 4 \\ 
    Kernel & 0 & 1 \\ 
    Kite & 0 & 1 \\ 
    Knapsack & 0 & 1 \\ 
    Lecture Planning & 0 & 1 \\ 
    Linear Arrangement & 0 & 1 \\ 
    Longest Path & 0 & 1 \\ 
    Max 2-SAT & 2 & 1 \\ 
    MAX 2-XORSAT & 0 & 1 \\ 
    Max Cover & 0 & 1 \\ 
    Max Cover by Cliques & 0 & 1 \\ 
    Max k-Colorable Subgraph & 0 & 1 \\ 
    Max-SAT & 0 & 1 \\ 
    Min 2-SAT Deletion & 0 & 1 \\ 
    NAE 3SAT & 1 & 1 \\ 
    NAE 4SAT & 1 & 1 \\ 
    Partition & 2 & 1 \\ 
    Path Selection & 0 & 1 \\ 
    Planar 3-Coloring & 0 & 1 \\ 
    SAT & 6 & 0 \\ 
    Set Cover & 5 & 3 \\ 
    Set Packing & 0 & 2 \\ 
    Sparse Subgraph & 0 & 1 \\ 
    Steiner Tree & 0 & 1 \\ 
    Strongly Independent Set & 0 & 2 \\ 
    Subgraph Isomorphism & 1 & 1 \\ 
    Subset Sum & 2 & 2 \\ 
    Suspicious Coalition & 0 & 1 \\ 
    Traveling Salesman & 1 & 1 \\ 
    Triangle Cover & 0 & 2 \\ 
    Undirected Feedback Set & 1 & 2 \\ 
    Unit Intersection & 0 & 1 \\ 
    Unweighted Bisection Width & 0 & 1 \\ 
    Unweighted Max Bisection & 2 & 1 \\ 
    Unweighted Max Cut & 4 & 2 \\  
    Vertex Cover & 11 & 3 \\ 
    Vertex Disjoint Paths & 0 & 1 \\ 
    Weighted Bisection Width & 0 & 2 \\ 
    Weighted Max Bisection & 1 & 1 \\ 
    Weighted Max Cut & 1 & 0 \\ 
    Zero One Equations & 0 & 1 \\ 
    Zero Weight Cycle & 0 & 1 \\ 
    \bottomrule
\end{longtable}

\section{List of reductions}

The dataset contains reductions over a wide range of difficulties, from easy generalizations (e.g., SAT to Max-SAT) to complex constructions (e.g., 3-SAT to 3-Coloring). The length of a reduction is a reasonable indicator of its difficulty, so we include the lengths of each reduction (in characters) in the following table. The complete distribution of lengths is visualized in Figure \ref{fig:lengths}.

\begin{longtable}{lll}
\caption{List of reductions between problems}
\label{table:reductions} \\
\toprule
\textbf{Source} & \textbf{Destination} & \textbf{Length} \\
\midrule
3-Coloring & 4-Coloring & 1525 \\ 
3-Coloring & Planar 3-Coloring & 5789 \\ 
3D Matching & 4D Matching & 1701 \\ 
3D Matching & ABCD Partition & 3897 \\ 
3D Matching & Exact Cover By 3-Sets & 1486 \\ 
3D Matching & Subset Sum & 2331 \\ 
3D Matching & Unit Intersection & 1747 \\ 
3D Matching & Zero One Equations & 1650 \\ 
3-Partition & Bin Packing & 1629 \\ 
3-SAT & 3-Coloring & 3553 \\ 
3-SAT & 4-SAT & 2130 \\ 
3-SAT & Clique & 2337 \\ 
3-SAT & Directed Hamiltonian Path & 3680 \\ 
3-SAT & Double SAT & 1518 \\ 
3-SAT & Independent Set & 2009 \\ 
3-SAT & Integer Programming & 2456 \\ 
3-SAT & Kernel & 2434 \\ 
3-SAT & Max 2-SAT & 2954 \\ 
3-SAT & NAE 4-SAT & 1737 \\ 
3-SAT & Vertex Cover & 2213 \\ 
3-SAT & Vertex Disjoint Paths & 3309 \\ 
4-Partition & 3-Partition & 4707 \\ 
ABCD Partition & 4-Partition & 1821 \\ 
Bipartization & Vertex Cover & 2692 \\ 
Clique & Bipartization & 2098 \\ 
Clique & Cutting At Most K Vertices & 3008 \\ 
Clique & Dense Subgraph & 1387 \\ 
Clique & Independent Set & 1505 \\ 
Clique & KITE & 1443 \\ 
Clique & Subgraph Isomorphism & 1032 \\ 
Dominating Set & Set Cover & 1252 \\ 
Edge Bipartization & Max 2-XORSAT & 2191 \\ 
Exact Cover By 3-Sets & Steiner Tree & 2066 \\ 
Hamiltonian Cycle & Hamiltonian Path & 1799 \\ 
Hamiltonian Cycle & Traveling Salesman & 1425 \\ 
Hamiltonian Path & Bounded Degree Spanning Tree & 1845 \\ 
Hamiltonian Path & Longest Path & 967 \\ 
Independent Set & Clique & 1505 \\ 
Independent Set & Dominating Set & 3316 \\ 
Independent Set & Hitting Set & 1463 \\ 
Independent Set & Integer Programming & 1946 \\ 
Independent Set & Path Selection & 1821 \\ 
Independent Set & Set Cover & 1701 \\ 
Independent Set & Set Packing & 1509 \\ 
Independent Set & Sparse Subgraph & 1267 \\ 
Independent Set & Strongly Independent Set & 1944 \\ 
Independent Set & Triangle Cover & 2631 \\ 
Independent Set & Undirected Feedback Set & 2398 \\ 
Independent Set & Vertex Cover & 1308 \\ 
Max 2-SAT & Min 2-SAT Deletion & 967 \\ 
Max 2-SAT & Unweighted Max Cut & 4609 \\ 
NAE 3-SAT & Unweighted Max Cut & 3031 \\ 
NAE 4-SAT & NAE 3-SAT & 2175 \\ 
Partition & Bin Packing & 1729 \\ 
Partition & Knapsack & 1875 \\ 
SAT & 3-SAT & 2401 \\ 
SAT & Almost SAT & 1354 \\ 
SAT & Directed Edge Disjoint Paths & 3925 \\ 
SAT & Independent Set & 2184 \\ 
SAT & Max SAT & 1447 \\ 
SAT & Subset Sum & 3724 \\ 
Set Cover & Dominating Set & 2115 \\ 
Set Cover & Integer Programming & 2034 \\ 
Set Cover & Max Cover & 939 \\ 
Set Cover & Max Cover By Cliques & 2755 \\ 
Set Cover & Max K Colorable Subgraph & 2713 \\ 
Subgraph Isomorphism & Common Subgraph & 1004 \\ 
Subset Sum & Partition & 1618 \\ 
Subset Sum & Zero Weight Cycle & 2312 \\ 
Traveling Salesman & Integer Programming & 2672 \\ 
Undirected Feedback Set & Contagious Set & 1790 \\ 
Unweighted Max Bisection & Unweighted Bisection Width & 1832 \\ 
Unweighted Max Bisection & Weighted Bisection Width & 2232 \\ 
Unweighted Max Cut & Densest Cut & 2414 \\ 
Unweighted Max Cut & Edge Bipartization & 1231 \\ 
Unweighted Max Cut & Linear Arrangement & 5681 \\ 
Unweighted Max Cut & Unweighted Max Bisection & 1769 \\ 
Vertex Cover & Clique & 1431 \\ 
Vertex Cover & Dominating Set & 3154 \\ 
Vertex Cover & Hitting Set & 1307 \\ 
Vertex Cover & Independent Set & 1306 \\ 
Vertex Cover & Lecture Planning & 1918 \\ 
Vertex Cover & Set Cover & 1536 \\ 
Vertex Cover & Set Packing & 1617 \\ 
Vertex Cover & Strongly Independent Set & 2262 \\ 
Vertex Cover & Suspicious Coalition & 2194 \\ 
Vertex Cover & Triangle Cover & 2418 \\ 
Vertex Cover & Undirected Feedback Set & 2199 \\ 
Weighted Max Bisection & Weighted Bisection Width & 2509 \\ 
Weighted Max Cut & Weighted Max Bisection & 1735 \\ 
\bottomrule
\end{longtable}

\begin{figure}
    \centering
    \includegraphics[width=0.6\linewidth]{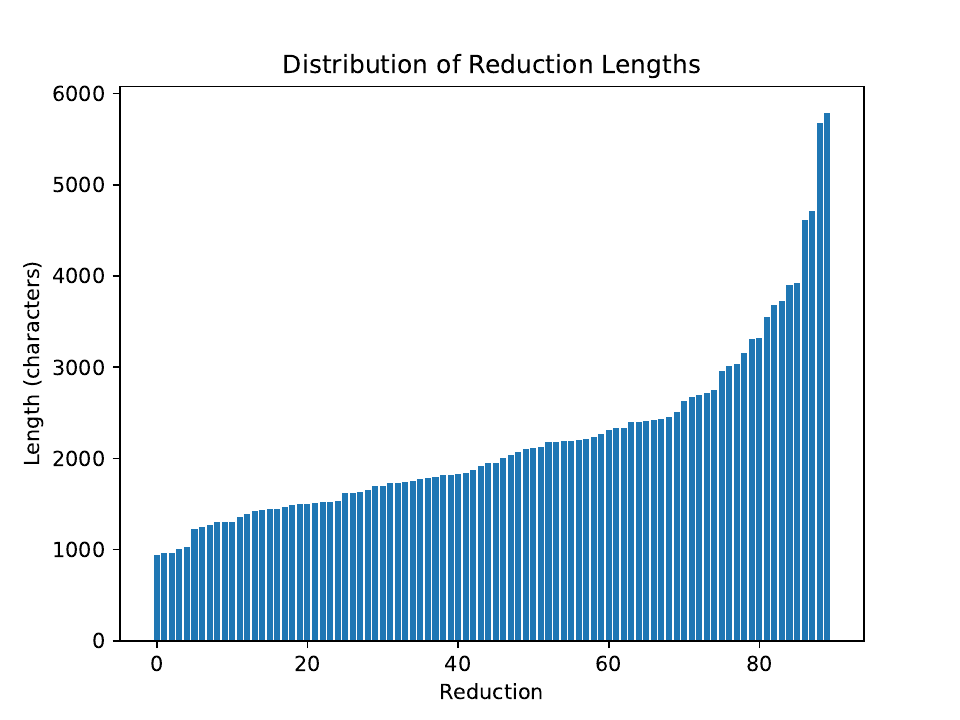}
    \caption{The distribution of lengths (i.e., number of characters) of reductions in the dataset. Most reductions have lengths between 1000 and 3000 characters. The minimum is 939, the maximum is 5789, and the mean is 2180.}
    \label{fig:lengths}
\end{figure}

\end{document}